\newcolumntype{P}[1]{>{\centering\arraybackslash}p{#1}}
\newcommand{\floor}[1]{\lfloor #1 \rfloor}
\newcommand{\eps}{\varepsilon}
\newcommand{\RL}{\mathbb{R}}
\newcommand{\norm}[1]{\|#1\|}
\newcommand{\rk}[1]{\mathrm{rank}(#1)}
\newcommand{\Poi}[1]{\mathrm{Poi}(#1)}
\newcommand{\Ex}[1]{\mathbb{E}\left[#1\right]}
\newcommand{\Var}[1]{\mathbb{V}\left[#1\right]}
\newcommand{\Lik}[1]{\mathcal{L}\left(#1\right)}
\newcommand{\amin}[1]{\mathrm{argmin}_{#1}}
\DeclareMathOperator{\pa}{\mathbf{pa}}
\tikzstyle{pred}=[draw,ellipse,minimum size=7mm,inner sep=0,thick]
\tikzstyle{varnode}=[draw,circle,minimum size=7mm,inner sep=0mm]%
\tikzstyle{observed}=[fill=black!20]%
\tikzstyle{leaf}=[draw,ellipse,inner sep=.2em]%
\tikzstyle{innernode}=[draw,rectangle,inner sep=.5em]%
\begin{document}

\title{Coresets for Dependency Networks}
\author{coresets, dependency networks, graphical models, structure discovery, Gaussian, Poisson, big data}

\author{\name Alejandro Molina
    \email first.last@tu-dortmund.de \\
    \addr CS Department \\
    TU Dortmund, Germany
    \AND
    \name Alexander Munteanu
    \email first.last@tu-dortmund.de \\
    \addr CS Department \\
    TU Dortmund, Germany
    \AND
    \name Kristian Kersting
    \email last@cs.tu-darmstadt.de \\
    \addr CS Dept.~and Centre for CogSci \\
    TU Darmstadt, Germany
}

\editor{}
\maketitle

\begin{abstract}
Many applications infer the structure of a probabilistic graphical model from data
to elucidate the relationships between variables. But how can we train graphical models on a massive data set? 
In this paper, we show how to construct coresets---compressed data sets which can be used as proxy for the original data and have provably bounded worst case error---for Gaussian dependency networks (DNs), i.e., 
cyclic directed graphical models over Gaussians, 
where the parents of each variable are its Markov blanket. Specifically, we prove that Gaussian DNs admit coresets of size independent of the size of the data set. Unfortunately, this does not extend to DNs over members of the exponential family in general. As we will prove, Poisson DNs do not admit small coresets. Despite this worst-case result, we will provide
an argument why our coreset construction for DNs can still work well in practice on count data.
To corroborate our theoretical results, we empirically evaluated the resulting Core DNs on 
real data sets. The results demonstrate significant gains over no or naive sub-sampling,
even in the case of count data. 
\end{abstract}

\section{Introduction}

\noindent Artificial intelligence and machine learning  have  achieved  considerable  successes  in recent  years,  and  an  ever-growing  number  of  disciplines rely on them. Data is now ubiquitous, and there is great value from  understanding  the  data,  building  e.g. probabilistic graphical models to elucidate the relationships between variables. In the big data era, however, scalability has become crucial for any useful machine learning approach. In this paper, we consider the problem of training graphical models, in particular Dependency Networks~\cite{heckerman00}, on massive data sets. They are cyclic directed graphical models, where the parents of each variable are its Markov blanket, and have been proven successful in various tasks, such as
collaborative filtering~\cite{heckerman00}, 
phylogenetic analysis~\cite{carlsonBRBMKMWHGH08}, genetic analysis~\cite{dobra09,phatakKCW10}, network inference from sequencing data~\cite{Allen2013},
and traffic as well as topic modeling~\cite{hadiji2015mlj_pdns}.

Specifically, we show that Dependency Networks over Gaussians---arguably one of the most prominent type of distribution in statistical machine learning---admit coresets of size independent of the size of the data set. Coresets are weighted subsets of the data, which guarantee that models fitting them will also provide a good fit for the original data set, and  
have been studied before for clustering \cite{BadoiuHI02,feldmanFK11,FeldmanSS13,lucicBK16}, classification 
\cite{Har-PeledRZ07,Har-Peled15,ReddiPS15}, 
regression \cite{DrineasMM06,DrineasMM08,DasguptaDHKM09,geppert2017random},
and the smallest enclosing ball problem 
\cite{BadoiuC03,BadoiuC08,MunteanuSF14,AgarwalS15}; 
we refer to \cite{Phillips17} for a recent extensive literature overview. Our contribution continues this line of research and generalizes the use of coresets to probabilistic graphical modeling.

Unfortunately, this coreset result does not extend to Dependency Networks over members of the exponential family in general. We prove that Dependency Networks over Poisson random variables~\cite{Allen2013,hadiji2015mlj_pdns} do not admit (sublinear size) coresets: every single input point is important for the model and needs to appear in the coreset.
This is an important negative result, since count data---the primary target of Poisson distributions---is at the center of many scientific 
endeavors from citation counts to web page hit counts, from counts of
procedures in medicine to the count of births and deaths in
census, from counts of words in a document to the count
of gamma rays in physics. 
Here, modeling one event such as
the number of times a certain lab test yields a particular
result can provide an idea of the number of potentially invasive
procedures that need to be performed on a patient.
Thus, elucidating the relationships between variables can yield great insights
into 
massive count data.
Therefore, despite our worst-case result, we will provide
an argument why our coreset construction for Dependency Networks can still work well in practice on count data.
To corroborate our theoretical results, we empirically evaluated the resulting Core Dependency Networks (CDNs)  on 
several real data sets. The results demonstrate significant gains over no or naive sub-sampling,
even for count data.

We proceed as follows. We review Dependency Networks (DNs), prove that Gaussian DNs admit sublinear size coresets, and discuss the possibility to generalize this result to count data. Before concluding, we illustrate our theoretical results empirically.

\section{Dependency Networks}

Most of the existing AI and machine learning literature on graphical models
is dedicated to binary, multinominal, or certain classes of continuous (e.g. Gaussian) random variables.
Undirected models, aka~{\it Markov Random Fields} (MRFs), such as Ising (binary random variables) and Potts (multinomial random variables) models have found a lot of applications in various fields
such as robotics, computer vision and statistical physics, among others.
Whereas MRFs allow for cycles in the structures, 
directed models aka~{\it Bayesian Networks} (BNs) required acyclic directed relationships among the random variables. 

{\it Dependency Networks} (DNs)---the focus of the present paper---combine concepts from directed and undirected worlds and are due to \citeauthor{heckerman00}~(\citeyear{heckerman00}). Specifically,
like BNs, DNs have directed arcs but they allow for networks with cycles and bi-directional arcs, akin to MRFs. This makes
DNs quite appealing for many applications because we can build multivariate models from univariate distributions~\cite{Allen2013,yang15,hadiji2015mlj_pdns}, 
while still permitting  efficient structure learning using local estimtatiors or
gradient tree boosting.
Generally, if the data are fully observed, learning is done locally on the level of the conditional probability distributions for each variable mixing
directed and indirected as needed. 
Based on these local distributions, samples from the joint distribution are obtained via Gibbs sampling.
Indeed, the Gibbs sampling neglects the question of a consistent joint probability distribution and instead makes only use of local distributions.
The generated samples, however, are often sufficient to answer many probability queries.



Formally, let $X=({X}^{(1)}, \ldots, {X}^{(d)})$ denote a random vector and ${x}$ its instantiation.
A \emph{Dependency Network} (DN) on $X$ is a pair $(\mathcal G, \Psi)$ where $\mathcal G = (\mathcal V, \mathcal E)$ is a directed, possibly cyclic, graph where each node in $\mathcal V=[d]=\{1,\ldots,d\}$ corresponds to the random variable $X^{(i)}$.
In the set of directed edges $\mathcal E \subseteq \mathcal V\times \mathcal V \setminus \{(i,i)\,|\,i\in [d] \}$, each edge models a dependency between variables, i.e., if there is no edge between $i$ and $j$ then the variables $X^{(i)}$ and $X^{(j)}$ are conditionally independent given the other variables $X^{\setminus i,j}$ indexed by $[d]\setminus \{i,j\}$ in the network.
We refer to the nodes that have an edge pointing to ${X}^{(i)}$ as its parents, denoted by $\pa_i = \{{X}^{(j)} \,|\, (j,i)\in\mathcal E \}$.
$\Psi=\{p_i\,|\, i\in[d]\}$ is a set of conditional probability distributions associated with each variable $X^{(i)}\sim p_i$, where
\begin{equation*}
p_i = p({x}^{(i)} |\, {\pa_i}) = p({x}^{(i)} |\, {{x}^{\setminus i}})\;.
\end{equation*}
As example of such a local model, consider Poisson conditional probability distributions as illustrated in Fig.~\ref{fig:pdn_ex} (left):
\begin{equation*}
p({x}^{(i)} |\, {\pa_i}) = \frac{\lambda_i(x^{\setminus i})^{{x}^{(i)}}}{{x}^{(i)}!} e^{-\lambda_i(x^{\setminus i})}\;.
\end{equation*}
Here, $\lambda_i( x^{\setminus i})$ highlights the fact that the mean can have a functional form that is dependent on ${X}^{(i)}$'s parents.
Often, we will refer to it simply as $\lambda_i$. 
The construction of the local conditional probability distribution is similar to the (multinomial) Bayesian network case. However, in the case of DNs, the graph is not necessarily acyclic and $p({x}^{(i)} |\, {x}^{\setminus i})$ typically has an infinite range, and hence cannot be represented using a finite table of probability values.
Finally, the full joint distribution is simply defined as the product of local distributions:
\begin{equation*}
p({\bf x}) = \prod\nolimits_{i\in[d]}p({x}^{(i)} |\, {x}^{\setminus i})\;,
\end{equation*}
also called pseudo likelihood. For the Poisson case, this reads
\begin{equation*}
p({\bf x}) =  \prod\nolimits_{i\in[d]} \frac{\lambda_i^{{x}^{(i)}}}{{x}^{(i)}!} e^{-\lambda_i}\;.
\end{equation*}
Note, however, that doing so does not guarantee the existence of a consistent joint distribution, i.e., a joint distribution of which they are the conditionals. \citeauthor{bengioLAY14}~(\citeyear{bengioLAY14}), however, have recently proven the existence of a consistent distribution per given evidence, which does not have to be known in closed form,
as long as an unordered Gibbs sampler converges.

\begin{figure}[t]
    \centering
        \includegraphics[width=0.45\columnwidth]{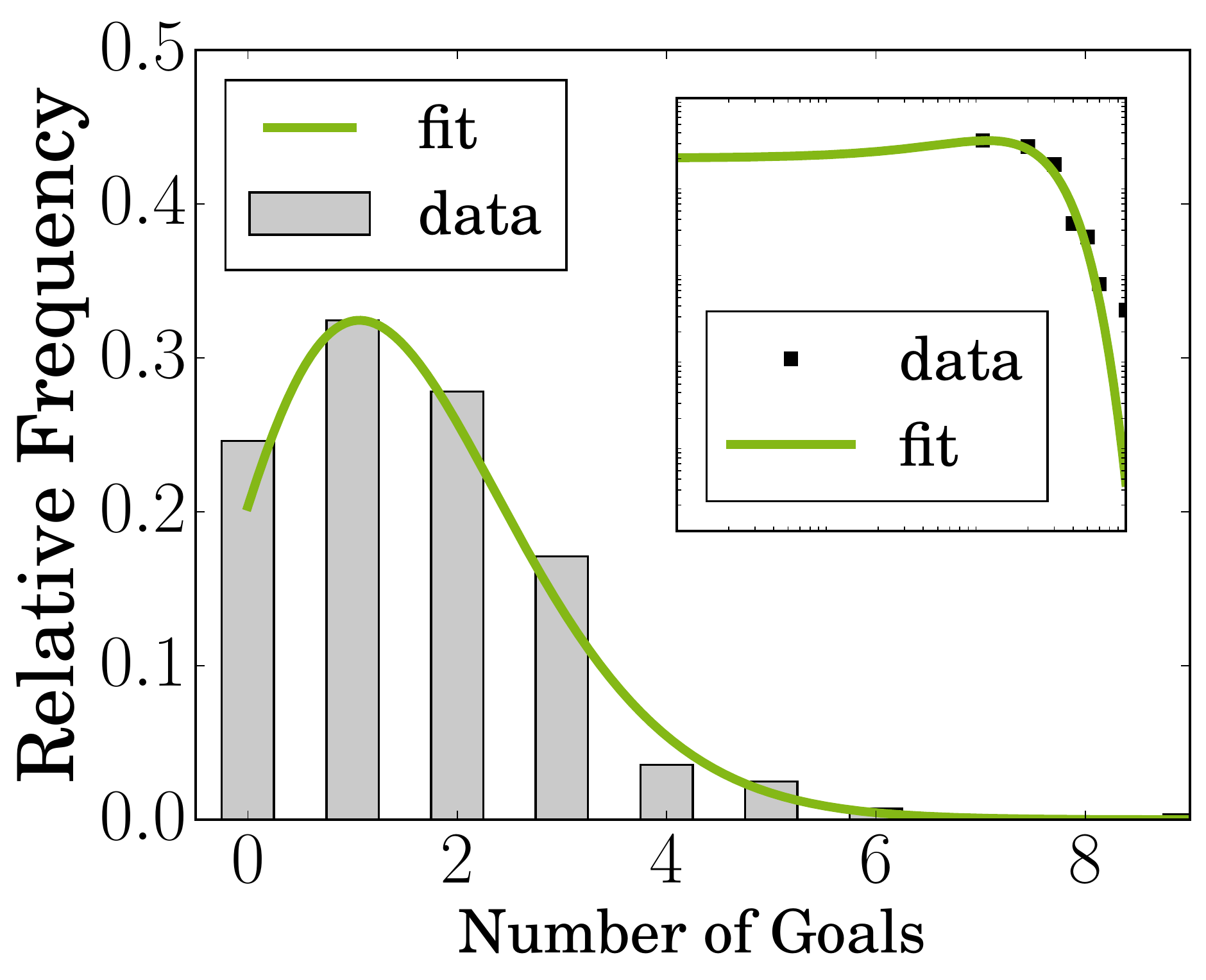}
    \quad\quad%
	    \begin{tikzpicture}[node distance=1.4cm]


\node (z) at (1.4,-3.5) {$~$}; 
\node[varnode,yshift=-1cm,minimum size=.90cm] (x0) at (0,0) {$\,X^{(0)}$};
\node[varnode, below of=x0,xshift=-1cm,minimum size=.90cm] (x1) {$\,X^{(1)}$};
\node[varnode, below of=x0,xshift=1cm,minimum size=.90cm] (x2) {$\,X^{(2)}$};
\draw[-latex] (x0) -- (x1);
\draw[-latex] (x0) -- (x2);
\draw[->,-latex] (x1) to [bend right=15] (x2);
\draw[->,-latex] (x2) to [bend right=15] (x1);


\end{tikzpicture}
    \caption{
    Illustration of Dependency Networks (DNs) using Poissons. {\bf (left)}
    The number of goals scored in soccer games follows a Poisson distribution. The plot shows the distribution of home goals in the season 2012/13 of the German Bundesliga by the home team. The home team scored on average $\lambda = 1.59$ goals per game.
    {\bf (right)} Example structure of a Poisson DN. The conditional distribution of each count variable given its neighbors is a Poisson distribution.
    Similar to a Bayesian network a Poisson DN is directed, however, it also contains cycles.
    (Best viewed in color)%
    \label{fig:pdn_ex}}
\end{figure}

\section{Core Dependency Networks}
As argued, learning Dependency Networks (DNs) amounts to determining the conditional probability distributions from a given set of $n$ training instances $x_i\in\RL^d$ representing the rows of the data matrix $X\in\RL^{n\times d}$ over $d$ variables.
Assuming that $p({x}^{(i)} |\, {\pa_i})$ is parametrized as a generalized linear model (GLM)~\cite{mccullagh89glms}, this amounts to estimating the parameters $\gamma^{(i)}$ of the GLM associated with each variable ${X}^{(i)}$, since this completely determines the local distributions, 
but $p({x}^{(i)} |\, {\pa_i})$ will possibly depend on all other variables in the network, and these dependencies define the structure of the network.
This view of training DNs as fitting $d$ GLMs to the data allows us to develop {\it Core Dependency Networks} (CDNs): Sample a coreset and train a DN over certain members of the GLM family on the sampled corest.

A \emph{coreset} is a (possibly) weighted and usually considerably smaller subset of the input data that approximates a given objective function for all candidate solutions: 
\begin{definition}[$\varepsilon$-coreset]
	Let $X$ be a set of points from a universe $U$ and let $\Gamma$ be a set of candidate solutions. Let $f:U\times \Gamma\rightarrow \mathbb{R}^{\geq 0}$ be a non-negative measurable function. Then a set $C \subset X$ is an $\varepsilon$-coreset of $X$ for $f$, if 
	\[\forall \gamma\in \Gamma: |f(X,\gamma)-f(C,\gamma)|\leq \varepsilon\cdot f(X,\gamma).\]
\end{definition}
We now introduce the formal framework that we need towards the design of coresets for learning dependency networks. 
A very useful structural property for $\ell_2$ based objective (or loss) functions is the concept of an $\eps$-subspace embedding.

\begin{definition}[$\eps$-subspace embedding]
	\label{def:epsSE}
	An $\eps$-subspace embedding for the columnspace of $X$ is a matrix $S$ such that
	\[
		\forall \gamma \in {\RL}^d: (1-\eps)\norm{X\gamma}^2\leq\norm{SX\gamma}^2\leq(1+\eps)\norm{X\gamma}^2
	\]
\end{definition}
\noindent We can construct a sampling matrix $S$ which forms an $\eps$-subspace embedding with constant probabilty in the following way:
Let $U$ be any orthonormal basis for the columnspace of $X$. This basis can be obtained from the singular value decomposition (SVD) $X=U\Sigma V^T$ of the data matrix. Now let $\rho=\rk{U}=\rk{X}$ and define the \emph{leverage scores} $l_i = \norm{U_{i*}}^2/\norm{U}_F^2=\norm{U_{i*}}^2/\rho$ for $i\in[n]$. Now we fix a sampling size parameter $k=O(\rho\log(\rho/\eps) / \eps^2)$, sample the input points one-by-one with probability $q_i = \min\{1, k\cdot l_i\}$ and reweight their contribution to the loss function by $w_i = 1/q_i$. Note that, for the sum of squares loss, this corresponds to defining a diagonal (sampling) matrix $S$ by $S_{ii} = 1/\sqrt{q_i}$ with probability $q_i$ and $S_{ii} = 0$ otherwise. Also note, that the expected number of samples is $k=O(\rho\log(\rho/\eps) / \eps^2)$, which also holds with constant probability by Markov's inequality. Moreover, to give an intuition why this works, note that for any fixed $\gamma\in\RL^d$, we have
\[
\Ex{ \norm{SX\gamma}^2 } = \sum \left(\frac{x_i\gamma}{\sqrt{q_i}}\right)^2 q_i = \sum (x_i\gamma)^2 = \norm{X\gamma}^2.
\]
The significantly stronger property of forming an $\eps$-subspace embedding, according to Definition \ref{def:epsSE}, follows from a matrix approximation bound given in \cite{RudelsonV07,DrineasMM08}.
\begin{lemma}
	Let $X$ be an input matrix with $\rk{X}=\rho$. Let $S$ be a sampling matrix constructed as stated above with sampling size parameter $k=O(\rho\log(\rho/\eps) / \eps^2)$. Then $S$ forms an $\eps$-subspace embedding for the columnspace of $X$ with constant probability.
\end{lemma}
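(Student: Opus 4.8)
The plan is to reduce the subspace-embedding guarantee to a spectral-norm concentration statement about an orthonormal basis, and then to invoke the matrix approximation bound of \cite{RudelsonV07,DrineasMM08} cited just above. First I would rewrite the defining inequality using the SVD $X=U\Sigma V^T$. For any $\gamma\in\RL^d$ set $y=\Sigma V^T\gamma$, so that $X\gamma=Uy$ and $SX\gamma=SUy$; since $U$ has orthonormal columns, $\norm{X\gamma}^2=\norm{Uy}^2=\norm{y}^2$. Hence $S$ is an $\eps$-subspace embedding for the columnspace of $X$ if and only if $(1-\eps)\norm{y}^2\le\norm{SUy}^2\le(1+\eps)\norm{y}^2$ holds for all $y\in\RL^\rho$ (as $\gamma$ ranges over $\RL^d$, $y=\Sigma V^T\gamma$ ranges over all of $\RL^\rho$, because $\Sigma V^T$ has full row rank $\rho$), and since $\norm{SUy}^2=y^TU^TS^TSUy$ this is equivalent to the single spectral condition
\[
\norm{U^TS^TSU-I_\rho}_2\le\eps .
\]
Thus the whole statement is about how well the sampled, reweighted Gram matrix of $U$ approximates the identity.

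Next I would exhibit $U^TS^TSU$ as a sum of independent random matrices amenable to a matrix concentration inequality. Writing $U_{i*}$ for the $i$-th row of $U$, the sampling-and-reweighting rule makes $S^TS$ a diagonal matrix whose $i$-th entry is $1/q_i$ exactly when point $i$ is selected and $0$ otherwise, so
\[
U^TS^TSU=\sum\nolimits_{i\in[n]} Z_i, \quad Z_i=\frac{\mathds{1}[i\text{ selected}]}{q_i}\,U_{i*}^TU_{i*} .
\]
The $Z_i$ are independent, and since point $i$ is selected with probability $q_i$ we get $\Ex{Z_i}=U_{i*}^TU_{i*}$, whence $\Ex{U^TS^TSU}=\sum_i U_{i*}^TU_{i*}=U^TU=I_\rho$. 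The key quantitative point is the uniform bound on the spectral norm of each summand: whenever $q_i=k\,l_i<1$ we have $\norm{Z_i}_2=\norm{U_{i*}}^2/q_i=l_i\rho/(k\,l_i)=\rho/k$, while if $q_i=1$ the point is included deterministically and contributes no randomness at all. Hence every random summand has spectral norm at most $\rho/k=O(\eps^2/\log(\rho/\eps))$, which is exactly the per-term smallness that a matrix Chernoff/Bernstein argument needs.

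Finally I would feed these two facts---expectation equal to $I_\rho$ and per-term spectral norm $O(\rho/k)$---into the matrix approximation bound of \cite{RudelsonV07,DrineasMM08}. That bound certifies that a sum of independent positive-semidefinite rank-one terms of this form concentrates around its mean in spectral norm: for $k=O(\rho\log(\rho/\eps)/\eps^2)$ it yields $\norm{U^TS^TSU-I_\rho}_2\le\eps$ with constant probability, which by the reduction above is precisely the claimed $\eps$-subspace embedding. In this chain the factor $\rho$ reflects the ambient dimension (trace) of the random sum, the $\log(\rho/\eps)$ comes from controlling all $\rho$ eigenvalue directions simultaneously inside the concentration inequality, and the $\eps^{-2}$ is the usual price of an additive-$\eps$ spectral accuracy.

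The main obstacle---and the reason this is stated as a short lemma rather than proved from scratch---is the matrix concentration inequality itself, which we take as the cited black box; the genuine work left on our side is the clean reduction to the isotropic spectral statement $\norm{U^TS^TSU-I_\rho}_2\le\eps$ and the verification that the reweighting $w_i=1/q_i$ together with the leverage-proportional probabilities $q_i$ enforces the uniform per-term norm bound that the inequality requires as a hypothesis. A minor point needing care is the truncation $q_i=\min\{1,k\,l_i\}$: rows of large leverage are kept deterministically, so they match their expectation exactly and only reduce the variance, meaning the concentration argument applies verbatim to the remaining genuinely random terms.
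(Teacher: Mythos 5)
Your proposal is correct and follows essentially the same route as the paper: both reduce the subspace-embedding claim via the SVD to the single spectral condition $\norm{U^TS^TSU-U^TU}\leq\eps$ and then invoke the cited matrix approximation bound of \cite{RudelsonV07,DrineasMM08} for the chosen $k$. The only cosmetic difference is that the paper uses the expectation form of that bound followed by Markov's inequality and an explicit verification that $C\sqrt{\log k / k}\cdot\sqrt{\rho}\leq\eps$, whereas you phrase the same step as checking the hypotheses (mean equal to the identity, per-term spectral norm $\rho/k$) of a matrix Chernoff-type inequality.
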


\begin{proof} Let $X=U\Sigma V^T$ be the SVD of $X$. By Theorem 7 in \cite{DrineasMM08} there exists an absolute constant $C>1$ such that 
	\begin{eqnarray*}
		\Ex{ \norm{U^TS^TSU - U^TU} } &\leq& C\sqrt{\frac{\log k}{k}} \norm{U}_F \norm{U} \\
		&\leq& C\sqrt{\frac{\log k}{k}} \sqrt{\rho} \;\leq\; \eps,
	\end{eqnarray*}
	where we used the fact that $\norm{U}_F = \sqrt\rho$ and $\norm{U}=1$ by orthonormality of $U$. The last inequality holds by choice of $k=D\rho\log(\rho/\eps) / \eps^2$ for a large enough absolute constant $D>1$ such that $\frac{1+\log D}{ D } < \frac{1}{4C^2}$, since
	\begin{align*}
		&\frac{\log k}{k} = \frac{\log(D \rho\log(\rho/\eps) / \eps^2 )}{ D\rho\log(\rho/\eps) / \eps^2 }
		\leq  \frac{2 \eps^2 \log(D\rho \log(\rho/\eps)/\eps )}{ D\rho\log(\rho/\eps) } \\
		&\leq  \frac{4 \eps^2(\log(\rho/\eps) + \log D )}{ D\rho\log(\rho/\eps)} 
		\leq  \frac{4\eps^2}{\rho} \left( \frac{1+\log D}{ D } \right) < \frac{\eps^2}{C^2\rho}\;. 
	\end{align*}
	By an application of Markov's inequality and rescaling $\eps$, we can assume with constant probability 
	\begin{equation}
		\label{eqn:epsineq}
		\norm{U^TS^TSU - U^TU} \leq \eps.
	\end{equation}
	We show that this implies the $\eps$-subspace embedding property. To this end, fix $\gamma\in\RL^d.$
	\begin{eqnarray*}
		&~& |\,\norm{SX\gamma}^2 - \norm{X\gamma}^2\,| \\
		&=& \norm{\gamma^TX^TS^TSX\gamma - \gamma^TX^TX\gamma}\\
		&=& \norm{\gamma^TV\Sigma U^TS^TSU\Sigma V^T\gamma - \gamma^TV\Sigma U^TU\Sigma V^T\gamma} \\
		&=& \norm{\gamma^TV\Sigma\, (U^TS^TSU - U^TU)\, \Sigma V^T\gamma} \\
		&\leq& \norm{U^TS^TSU - U^TU} \cdot \norm{\Sigma V^T\gamma}^2 \\
		&\leq& \norm{U^TS^TSU - U^TU} \cdot \norm{X\gamma}^2 \;\leq\; \eps \norm{X\gamma}^2,
	\end{eqnarray*}
	The first inequality follows by submultiplicativity, and the second from rotational invariance of the spectral norm. Finally we conclude the proof by Inequality (\ref{eqn:epsineq}).
\end{proof}

The question arises whether we can do better than $O(\rho\log(\rho/\eps) / \eps^2)$. One can show by reduction from the coupon collectors theorem that there is a lower bound of $\Omega(\rho \log \rho)$ matching the upper bound up to its dependency on $\varepsilon$. The hard instance is a $d^m \times d, m\in\mathbb{N}$ orthonormal matrix in which the scaled canonical basis $\mathds{I}_d/\sqrt{d^{m-1}}$ is stacked $d^{m-1}$ times. The leverage scores are all equal to $1/d^m$, implying a uniform sampling distribution with probability $1/d$ for each basis vector. Any rank $\rho=d$ preserving sample must comprise at least one of them. This is exactly the coupon collectors theorem with $d$ coupons which has a lower bound of $\Omega(d\log d)$ \cite{MotwaniR95}. The fact that the sampling is without replacement does not change this, since the reduction holds for arbitrary large $m$ creating sufficient multiple copies of each element to simulate the sampling with replacement \cite{Tropp11}.

Now we know that with constant probability over the randomness of the construction algorithm, $S$ satisfies the $\eps$-subspace embedding property for a given input matrix $X$. This is the structural key property to show that actually $SX$ is a coreset for Gaussian linear regression models and dependency networks. Consider $(\mathcal G, \Psi)$, a Gaussian dependency network (GDN), i.e., a collection of Gaussian linear regression models
\[
\Psi = \{p_i(X^{(i)}|X^{\setminus i},\gamma^{(i)}) = \mathcal{N}(X^{\setminus i}\gamma^{(i)},\sigma^2)\;|\; i\in[d]\}
\]
on an arbitrary digraph structure $\mathcal G$ \cite{heckerman00}.
The logarithm of the \emph{(pseudo-)likelihood} \cite{Besag75} of the above model is given by
\[
\ln \Lik{ \Psi } = \ln \prod p_i = \sum \ln p_i.
\]
A maximum likelihood estimate can be obtained by maximizing this function with respect to $\gamma = (\gamma^{(1)},\ldots,\gamma^{(d)})$ which is equivalent to minimizing the GDN loss function
\[
f_{G}(X,\gamma) = \sum \norm{X^{\setminus i}\gamma^{(i)}-X^{(i)}}^2.
\]

\begin{theorem}
	\label{thm:epsSE->epsCS}
	Given $S$, an $\eps$-subspace embedding for the columnspace of $X$ as constructed above, $SX$ is an $\eps$-coreset of $X$ for the GDN loss function.
\end{theorem}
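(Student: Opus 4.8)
The plan is to reduce the GDN loss, term by term, to the single quadratic form $\norm{X\gamma}^2$ that the subspace embedding already controls. The key observation is that each summand of $f_{G}$ is itself the squared norm of a vector of the form $X\beta^{(i)}$ for a suitably augmented $\beta^{(i)}\in\RL^d$. Concretely, I would define $\beta^{(i)}$ to agree with $\gamma^{(i)}$ on the coordinates indexed by $[d]\setminus\{i\}$ and to equal $-1$ in coordinate $i$. Then
\[
X\beta^{(i)} = X^{\setminus i}\gamma^{(i)} - X^{(i)},
\]
so that $\norm{X\beta^{(i)}}^2$ is exactly the $i$-th regression residual appearing in $f_{G}(X,\gamma)$, and likewise $\norm{SX\beta^{(i)}}^2 = \norm{SX^{\setminus i}\gamma^{(i)} - SX^{(i)}}^2$ is the $i$-th residual appearing in $f_{G}(SX,\gamma)$.

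With this reformulation in hand, the subspace embedding does the rest. Applying Definition~\ref{def:epsSE} to each $\beta^{(i)}$ gives, for every $i\in[d]$,
\[
(1-\eps)\norm{X\beta^{(i)}}^2 \leq \norm{SX\beta^{(i)}}^2 \leq (1+\eps)\norm{X\beta^{(i)}}^2.
\]
Since the embedding property is universal over $\RL^d$, these bounds hold simultaneously for all the $\beta^{(i)}$, so summing over $i$ preserves the two-sided inequality and yields
\[
(1-\eps)\,f_{G}(X,\gamma) \leq f_{G}(SX,\gamma) \leq (1+\eps)\,f_{G}(X,\gamma).
\]
Rearranging immediately produces $|f_{G}(X,\gamma) - f_{G}(SX,\gamma)| \leq \eps\, f_{G}(X,\gamma)$, which is precisely the $\eps$-coreset guarantee; as $\gamma$ was arbitrary, it holds for all candidate solutions.

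The argument is essentially bookkeeping once the reformulation is set up, so I do not anticipate a genuine obstacle. The one point that requires care, and the real crux of the proof, is the augmentation step: one must recognize that the regression residual is a quadratic form in the \emph{full} matrix $X$ (through the vector $\beta^{(i)}$ carrying a $-1$ in the response slot), not merely in the submatrix $X^{\setminus i}$. This is exactly what lets the embedding property, stated for $X$ and arbitrary $\gamma\in\RL^d$, apply verbatim to the residuals; without the $-1$ augmentation the response column $X^{(i)}$ would fall outside the scope of the embedding. A secondary point worth making explicit is that a \emph{single} matrix $S$ serves all $d$ regressions at once, precisely because the embedding holds for every vector in $\RL^d$ simultaneously, so one coreset suffices for the entire dependency network rather than one sample per variable.
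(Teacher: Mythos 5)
Your proposal is correct and follows essentially the same route as the paper's proof: the paper likewise augments each $\gamma^{(i)}$ with a $-1$ in coordinate $i$ (via the map $\Phi(\gamma^{(i)})=\mathds{I}_d^{\setminus i}\gamma^{(i)}-e_i$) so that each residual becomes $\norm{X\beta^{(i)}}^2$, then applies the subspace embedding to every $\beta^{(i)}$ and sums. The only cosmetic difference is that the paper sums via the triangle inequality on the absolute deviations rather than adding the two-sided bounds, which is equivalent.
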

\begin{proof}
	Fix an arbitrary $\gamma = (\gamma^{(1)},\ldots,\gamma^{(d)})\in\RL^{d(d-1)}$. Consider the affine map $\Phi: \RL^{d-1}\times [d] \rightarrow \RL^d$, defined by $\Phi(\gamma^{(i)}) =  \mathds{I}_d^{\setminus i}\gamma^{(i)} - e_i$. Clearly $\Phi$ extends its argument from $d-1$ to $d$ dimensions by inserting a $-1$ entry at position $i$ and leaving the other entries in their original order. Let $\beta^{(i)} = \Phi(\gamma^{(i)})\in\RL^d$. Note that for each $i\in [d]$ we have
	\begin{equation}
		\label{eqn:identitiy}
		X\beta^{(i)} = X\Phi(\gamma^{(i)}) = X^{\setminus i}\gamma^{(i)}-X^{(i)},
	\end{equation}
	and each $\beta^{(i)}$ is a vector in $\RL^d$. Thus, the triangle inequality and the universal quantifier in Definition \ref{def:epsSE} guarantee that
	\begin{eqnarray*}
		&~& | \sum \norm{SX\beta^{(i)}}^2 - \sum \norm{X\beta^{(i)}}^2 \,| \\
		&=& | \sum ( \norm{SX\beta^{(i)}}^2 - \norm{X\beta^{(i)}}^2 ) \,| \\
		&\leq& \sum | \norm{SX\beta^{(i)}}^2 - \norm{X\beta^{(i)}}^2 \,| \\
		&\leq& \sum \eps \norm{X\beta^{(i)}}^2 \;=\; \eps \sum \norm{X\beta^{(i)}}^2 .
	\end{eqnarray*}
	The claim follows by substituting Identity (\ref{eqn:identitiy}).
\end{proof}
It is noteworthy that computing one single coreset for the columnspace of $X$ is sufficient, rather than computing $d$ coresets for the $d$ different subspaces spanned by $X^{\setminus i}$.

From Theorem \ref{thm:epsSE->epsCS} it is straightforward to show that the minimizer found for the coreset is a good approximation of the minimizer for the original data.
\begin{corollary}
	Given an $\eps$-coreset $C$ of $X$ for the GDN loss function, let $\tilde \gamma \in \amin{\gamma\in\RL^{d(d-1)}}f_{G}(C,\gamma)$. Then it holds that
	\[
		f_{G}(X,\tilde \gamma) \leq (1+4\eps) \min_{\gamma\in\RL^{d(d-1)}} f_{G}(X,\gamma).
	\]
\end{corollary}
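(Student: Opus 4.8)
The plan is to convert the additive, two-sided coreset guarantee into a multiplicative sandwich and then exploit the optimality of $\tilde\gamma$ on $C$; this is the textbook route from an objective approximation to a minimizer approximation. Let $\gamma^* \in \amin{\gamma\in\RL^{d(d-1)}} f_{G}(X,\gamma)$ denote an optimal parameter for the full data. First I would rewrite the defining inequality of an $\eps$-coreset, namely $|f_{G}(X,\gamma)-f_{G}(C,\gamma)| \leq \eps\, f_{G}(X,\gamma)$, in its equivalent two-sided form
\[
(1-\eps)\,f_{G}(X,\gamma) \leq f_{G}(C,\gamma) \leq (1+\eps)\,f_{G}(X,\gamma),
\]
which holds for every $\gamma\in\RL^{d(d-1)}$ because the universal quantifier in the coreset definition ranges over all candidate solutions. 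In particular it may be instantiated at the two specific parameters $\tilde\gamma$ and $\gamma^*$.

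Second, I would chain these bounds with the defining property $f_{G}(C,\tilde\gamma) \leq f_{G}(C,\gamma^*)$ of $\tilde\gamma$ as a minimizer on the coreset, obtaining
\[
f_{G}(X,\tilde\gamma) \leq \frac{1}{1-\eps}\,f_{G}(C,\tilde\gamma) \leq \frac{1}{1-\eps}\,f_{G}(C,\gamma^*) \leq \frac{1+\eps}{1-\eps}\,f_{G}(X,\gamma^*),
\]
where the first step is the lower coreset bound applied at $\tilde\gamma$, the middle step is optimality on $C$, and the last step is the upper coreset bound applied at $\gamma^*$. It then remains to bound the prefactor: for $\eps\leq 1/2$ one has $\frac{1+\eps}{1-\eps} = 1 + \frac{2\eps}{1-\eps} \leq 1+4\eps$, since $\frac{2}{1-\eps}\leq 4$ in this regime, which yields the claimed estimate $f_{G}(X,\tilde\gamma) \leq (1+4\eps)\min_{\gamma} f_{G}(X,\gamma)$.

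I do not expect a genuine obstacle; the result is a direct corollary of Theorem~\ref{thm:epsSE->epsCS} combined with the coreset definition. The only points needing a little care are that the guarantee must be instantiated at \emph{both} minimizers (which is legitimate precisely because it is a uniform, all-$\gamma$ statement), that dividing by $1-\eps$ is harmless for $\eps<1$, and that the explicit constant $4$ requires the standard small-$\eps$ regime $\eps\leq 1/2$ in which coresets are constructed anyway. The degenerate case $f_{G}(X,\gamma^*)=0$ is absorbed automatically, as the same chain then forces $f_{G}(X,\tilde\gamma)=0$ and the inequality holds trivially.
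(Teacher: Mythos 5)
Your proof is correct and follows essentially the same chain as the paper's: lower coreset bound at $\tilde\gamma$, optimality of $\tilde\gamma$ on $C$, upper coreset bound at $\gamma^*$, and then $\frac{1+\eps}{1-\eps}\leq 1+4\eps$ for $\eps\leq\frac{1}{2}$. The additional remarks on the degenerate case and the legitimacy of instantiating the uniform guarantee at both minimizers are fine but not needed beyond what the paper already does.
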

\begin{proof}
	Let $\gamma^* \in \amin{\gamma\in\RL^{d(d-1)}}f_{G}(X,\gamma)$. Then
	\begin{eqnarray*}
		f_{G}(X,\tilde \gamma) &\leq & \frac{1}{1-\eps} f_{G}(C,\tilde \gamma) \;\;\,\leq \;\, \frac{1}{1-\eps} f_{G}(C,\gamma^*) \\
		&\leq & \frac{1+\eps}{1-\eps} f_{G}(X,\gamma^*) \;\leq \;\, {(1+4\eps)} f_{G}(X,\gamma^*).
	\end{eqnarray*}
	The first and third inequalities are direct applications of the coreset property, the second holds by optimality of $\tilde \gamma$ for the coreset, and the last follows from $\eps<\frac{1}{2}.$
\end{proof}
\noindent Moreover, the coreset does not affect inference within GDNs. Recently, it was shown for (Bayesian) Gaussian linear regression models that the entire multivariate normal distribution over the parameter space is approximately preserved by $\eps$-subspace embeddings \cite{geppert2017random}, which generalizes the above. This implies that the coreset yields a useful pointwise approximation in Markov Chain Monte Carlo inference via random walks like the pseudo-Gibbs sampler in \cite{heckerman00}.

\section{Negative Result on Coresets for Poisson DNs}
Naturally, the following question arises: Do  (sublinear size) coresets exist for dependency networks over the exponential family in general? Unfortunately, the answer is no! Indeed, there is no (sublinear size) coreset for the simpler problem of Poisson regression, which implies the result for Poisson DNs. We show this formally by reduction from the communication complexity problem known as \emph{indexing}. 

To this end, recall that the negative log-likelihood for Poisson regression is \cite{mccullagh89glms,Winkelmann08} \[\ell(\gamma):=\ell(\gamma| X,Y) = \sum \exp(x_i \gamma)-y_i\cdot x_i\gamma + \ln(y_i!).\]
\begin{theorem}
	\label{thm:poissonLB}
	Let $\Sigma_D$ be a data structure for $D=[X,Y]$ that approximates likelihood queries $\Sigma_D(\gamma)$ for Poisson regression, such that \[\forall \gamma\in\RL^d: \eta^{-1}\cdot\ell(\gamma|D) \leq \Sigma_D(\gamma) \leq \eta\cdot\ell(\gamma|D).\] If $\eta<{\frac{\exp(\frac{n}{4})}{2n^2}}$ then $\Sigma_D$ requires $\Omega(n)$ bits of storage.
\end{theorem}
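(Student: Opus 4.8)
The plan is to prove the space lower bound by a reduction from the one-way communication complexity of the \emph{indexing} problem: Alice holds a string $a\in\{0,1\}^n$, Bob holds an index $j\in[n]$, and after a single message from Alice, Bob must output $a_j$. This problem is a classical example known to require $\Omega(n)$ bits of communication, even for randomized bounded-error protocols. In the reduction, Alice turns her string into a Poisson regression instance $D=[X,Y]$, builds the approximate data structure $\Sigma_D$ on it, and sends $\Sigma_D$ as her message; Bob then answers a single carefully chosen likelihood query to decode $a_j$. If this decoding always succeeds, the message $\Sigma_D$ must carry $\Omega(n)$ bits, which is precisely the claimed storage bound.

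Concretely, I would take $d=n$ and all responses $y_i=0$, so that the negative log-likelihood collapses to $\ell(\gamma)=\sum_i \exp(x_i\gamma)$, which is strictly positive and makes the multiplicative guarantee meaningful. The bit $a_i$ is encoded in the $i$-th row by setting $x_i=e_i$ if $a_i=1$ and $x_i=\mathbf{0}$ otherwise. The key structural point is that for the query $\gamma_j=\tfrac{n}{2}e_j$ every other row contributes exactly $\exp(0)=1$ to the sum, independently of the remaining bits, so these rows form a fixed baseline of size $n-1$ that cannot wash out the signal. Row $j$ contributes $\exp(n/2)$ when $a_j=1$ and $\exp(0)=1$ when $a_j=0$, giving the two clean values $\ell(\gamma_j)=(n-1)+\exp(n/2)$ and $\ell(\gamma_j)=n$, respectively.

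The next step is the gap calculation. The ratio of the two possible true values is at least $\exp(n/2)/n$, whereas a two-sided $\eta$-approximation returns a value in $[\ell/\eta,\eta\ell]$; the two answer intervals are disjoint precisely when this ratio exceeds $\eta^2$. Since $\eta<\exp(n/4)/(2n^2)$ implies $\eta^2<\exp(n/2)/(4n^4)\le \exp(n/2)/n$, the intervals are indeed disjoint, so Bob can decode $a_j$ by comparing $\Sigma_D(\gamma_j)$ against the threshold $n\eta$ (outputting $1$ iff the returned value exceeds it). This recovers $a_j$ for every $j$, completing the reduction and yielding the $\Omega(n)$ lower bound. The same instance lower-bounds Poisson DNs, since Poisson regression is exactly the problem of fitting a single conditional distribution in the network, and any coreset is a special case of such a data structure.

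The step I expect to be the main obstacle is the encoding design rather than the communication-complexity plumbing: one must choose the data and the query so that (i) the dependence on $a_j$ appears as a large \emph{multiplicative} gap in $\ell$, and (ii) the $n-1$ other points contribute a value identical in both cases, so that no knowledge of the other bits is needed to decode. Encoding the bit in the presence or absence of the $e_j$ direction of $X$---so that the query toggles whether the huge term $\exp(n/2)$ is present at all---rather than in the response $y$, is what achieves both simultaneously; once this is in place, matching the constant in the threshold $\exp(n/4)/(2n^2)$ is a routine estimate with considerable slack to spare.
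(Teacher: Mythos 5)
Your reduction is sound as far as the literal statement goes: the indexing-based communication argument, the choice $y_i=0$ so that $\ell(\gamma)=\sum_i\exp(x_i\gamma)$, the fixed baseline of $n-1$ unit contributions, and the gap calculation $\eta^2<\exp(n/2)/(4n^4)<\exp(n/2)/n$ are all correct, and the paper's proof follows the same high-level strategy (indexing, one point pushed into an ``expensive'' region by the query, threshold test on $\Sigma_D(\gamma)$). The essential difference is the encoding dimension. You set $d=n$ and encode bit $i$ in the presence of the direction $e_i$, so the hardness of your instance comes from the ambient dimension growing with $n$. The paper instead embeds all $n$ bits into $d=3$: the points are scaled $n$-th roots of unity $x_i=(r\omega^i,-1)$ with $r=n/(1-\cos(2\pi/n))$, and Bob's query $\gamma=(\omega^j,\,r\cos(2\pi/n))$ is an affine separating hyperplane that isolates the single vertex $r\omega^j$ from the rest of the polygon, putting it at signed distance exactly $n$ while all other points satisfy $-2r\le x_i\gamma\le 0$. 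This low-dimensional construction is what gives the theorem its force in context: the point of the negative result is to contrast Poisson regression with the Gaussian case, where coresets of size $O(\rho\log(\rho/\eps)/\eps^2)$ with $\rho\le d$ exist. On your instance $\rho$ can be as large as $n$, so even Gaussian regression admits no sublinear coreset there, and the claimed separation (and the $k=\Omega(n/\log n)$ sample corollary read against the Gaussian upper bound) loses its meaning. If you want your cleaner algebra to carry the intended content, you need to compress the $n$ ``directions'' into constantly many dimensions, which is exactly what the polygon construction accomplishes.
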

\begin{proof}
	We reduce from the indexing problem which is known to have $\Omega(n)$ one-way randomized communication complexity \cite{JayramKS08}. Alice is given a vector $b\in\{0,1\}^n$. She produces for every $i$ with $b_i=1$ the points $x_i=(r\cdot\omega^i,-1)\in\RL^{3}$, where $\omega^i, i\in\{0,\ldots,n-1\}$ denote the $n^{th}$ unit roots in the plane, i.e., the vertices of a regular $n$-polygon of radius $r=n/(1-\cos( \frac{2\pi}{n} )) \leq n^3$ in canonical order. The corresponding counts are set to $y_i=1$. She builds and sends $\Sigma_D$ of size $s(n)$ to Bob, whose task is to guess the bit $b_j$. He chooses to query $\gamma = (\omega^j,r \cdot \cos( \frac{2\pi}{n} ) )\in\RL^{3}$. Note that this affine hyperplane separates $r\cdot\omega^j$ from the other scaled unit roots since it passes exactly through $r\cdot\omega^{(j-1) \bmod n}$ and $r\cdot\omega^{(j+1) \bmod n}$. Also, all points are within distance $2r$ from each other by construction and consequently from the hyperplane. Thus, $-2r \leq x_i\gamma \leq 0$ for all $i\neq j$.
	
	If $b_j=0$, then $x_j$ does not exist and the cost is at most
	\begin{eqnarray*}
		\ell(\gamma) &=& \sum \exp(x_i\gamma)-y_i\cdot x_i\gamma + \ln(y_i!) \\
		&\leq & \sum 1 +2r + 1
		\leq  2n + 2nr \leq 4n^4\;.
	\end{eqnarray*}
	
	If $b_j=1$ then $x_j$ is in the expensive halfspace and at distance exactly 
	\begin{eqnarray*}
		x_j\gamma &=& (r\omega^j)^T\omega^j - r \cdot \cos\left( \frac{2\pi}{n} \right)\\
		&=& r \cdot \left(1-\cos\left( \frac{2\pi}{n} \right)\right) \;=\; n
	\end{eqnarray*}
	So the cost is bounded below by	$\ell(\gamma) \geq \exp(n) - n + 1 \geq \exp(\frac{n}{2})$.
	
	Given $\eta < \frac{\exp(\frac{n}{4})}{2n^2}$, Bob can distinguish these two cases based on the data structure only, by deciding whether $\Sigma_D(\gamma)$ is strictly smaller or larger than $\exp(\frac{n}{4})\cdot 2n^2$. Consequently $s(n)=\Omega(n)$, since this solves the indexing problem.
\end{proof}

Note that the bound is given in bit complexity, but restricting the data structure to a sampling based coreset and assuming every data point can be expressed in $O(d\log n)$ bits, this means we still have a lower bound of $k=\Omega(\frac{n}{\log n})$ samples. 
\begin{corollary}
	Every sampling based coreset for Poisson regression with approximation factor $\eta<{\frac{\exp(\frac{n}{4})}{2n^2}}$ as in Theorem \ref{thm:poissonLB} requires at least $k=\Omega(\frac{n}{\log n})$ samples.
\end{corollary}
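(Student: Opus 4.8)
The plan is to treat a sampling-based coreset as a particular instance of the data structure $\Sigma_D$ appearing in Theorem~\ref{thm:poissonLB}, and then turn the $\Omega(n)$ bit lower bound into a bound on the number of retained samples by a simple encoding count. First I would note that a sampling-based coreset $C$ is nothing but a weighted subset of the input, so it induces the query map $\Sigma_D(\gamma) := \ell(\gamma\,|\,C) = \sum_{i\in C} w_i\bigl(\exp(x_i\gamma) - y_i\,x_i\gamma + \ln(y_i!)\bigr)$. By assumption this coreset approximates the Poisson regression log-likelihood within the multiplicative factor $\eta$ for every $\gamma$, which is exactly the hypothesis $\eta^{-1}\ell(\gamma|D)\le\Sigma_D(\gamma)\le\eta\,\ell(\gamma|D)$ required by the theorem. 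Since $\eta<\exp(\tfrac{n}{4})/(2n^2)$, Theorem~\ref{thm:poissonLB} applies verbatim and forces any such $\Sigma_D$, and hence the coreset $C$, to occupy $\Omega(n)$ bits of storage.

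Next I would bound the storage of a size-$k$ coreset from above. The crucial feature of the hard instance constructed in the proof of Theorem~\ref{thm:poissonLB} is that it lives in constant dimension ($d=3$), has all counts equal to $y_i=1$, and consists of the scaled roots of unity $x_i=(r\,\omega^i,-1)$ with radius $r\le n^3$. Each retained point is therefore pinned down by its index $i\in\{0,\dots,n-1\}$ alone, i.e. by $O(\log n)$ bits; invoking the stated assumption that each data point together with its weight is expressible in $O(d\log n)$ bits, a coreset of $k$ points takes $O(k\,d\log n)=O(k\log n)$ bits overall.

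Finally I would combine the two estimates: the lower bound $\Omega(n)$ and the upper bound $O(k\log n)$ on the coreset's bit complexity must be compatible, which forces $k\log n=\Omega(n)$, that is $k=\Omega(n/\log n)$, as claimed. The step I expect to be the only real obstacle is making the interface between the two regimes airtight: one must check that the coreset's approximation guarantee---a bounded multiplicative deviation of the loss---comfortably meets the extremely permissive threshold $\eta<\exp(\tfrac{n}{4})/(2n^2)$ (it does, with enormous room to spare), and, more importantly, that the sampling weights really fit into the $O(\log n)$-bit budget, which is precisely where the bounded-precision assumption on the data is used. Everything else reduces to elementary counting.
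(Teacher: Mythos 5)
Your proposal is correct and follows essentially the same route as the paper: the paper's justification (given in the sentence preceding the corollary) is precisely that the $\Omega(n)$ bit lower bound of Theorem~\ref{thm:poissonLB} applies to the coreset viewed as a data structure, while a $k$-point coreset with points expressible in $O(d\log n)$ bits occupies only $O(k\log n)$ bits, forcing $k=\Omega(n/\log n)$. Your additional care about encoding the sampling weights within the $O(\log n)$-bit budget is a reasonable tightening of the same argument rather than a different approach.
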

At this point it seems very likely that a similar argument can be used to rule out any $o(n)$-space constant approximation algorithm. This remains an open problem for now.

\section{Why Core DNs for Count Data can still work}
So far, we have a quite pessimistic view on extending CDNs beyond Gaussians. In the Gaussian setting, where the loss is measured in squared Euclidean distance, the number of important points, i.e., having significantly large leverage scores, is bounded essentially by $O(d)$. This is implicit in the original early works \cite{DrineasMM08} and has been explicitly formalized later \cite{LangbergS10,ClarksonW13}. It is crucial to understand that this is an inherent property of the norm function, and thus holds for arbitrary data. For the Poisson GLM, in contrast, we have shown that its loss function does not come with such properties from scratch. We constructed a worst case scenario, where basically every single input point is important for the model and needs to appear in the coreset.
Usually, this is not the case with statistical models, where the data is assumed to be generated i.i.d. from some generating distribution that fits the model assumptions. Consider for instance a data reduction for Gaussian linear regression via leverage score sampling vs. uniform sampling. It was shown that given the data follows the model assumptions of a Gaussian distribution, the two approaches behave very similarly. Or, to put it another way, the leverage scores are quite uniform. In the presence of more and more outliers generated by the heavier tails of $t$-distributions, the leverage scores increasingly outperform uniform sampling \cite{MaMY15}.

The Poisson model 
\begin{eqnarray}
	\label{eqn:PoissonModel}
	y_i &\sim &  \Poi{\lambda_i}, \;\lambda_i = \exp(x_i\gamma).
\end{eqnarray}
though being the standard model for count data, suffers from its inherent limitation on equidispersed data since $\Ex{y_i|x_i} = \Var{y_i|x_i} = \exp(x_i\gamma)$. Count data, however, is often overdispersed especially for large counts. This is due to unobserved variables or problem specific heterogeneity and contagion-effects. The log-normal Poisson model is known to be inferior for data which specifically follows the Poisson model, but turns out to be more powerful in modeling the effects that can not be captured by the simple Poisson model. It has wide applications for instance in econometric elasticity problems. We review the log-normal Poisson model for count data \cite{Winkelmann08}
\begin{eqnarray*}
y_i &\sim & \Poi{\lambda_i},\\
\lambda_i &=& \exp(x_i\gamma)u_i = \exp(x_i\gamma+v_i),\\
v_i \;=\; \ln u_i &\sim& \mathcal{N}\left(\mu,\sigma\right).
\end{eqnarray*}
A natural choice for the parameters of the log-normal distribution is $\mu = -\frac{\sigma^2}{2}$ in which case we have \begin{eqnarray*}
	\Ex{y_i|x_i} &=& \exp( x_i\gamma + \mu + {\sigma^2}/{2}) = \exp( x_i\gamma)\, ,\\
	\Var{y_i|x_i} &=& \Ex{y_i|x_i} +(\exp(\sigma^2)-1)\Ex{y_i|x_i}^2.
\end{eqnarray*}
It follows that $\Var{y_i|x_i} =\exp( x_i\gamma) + \Omega(\exp(x_i\gamma)^2) > \exp( x_i\gamma)$, where a constant $\sigma^2$ that is independent of $x_i$, controls the amount of overdispersion. Taking the limit for $\sigma \rightarrow 0$ we arrive at the simple model (\ref{eqn:PoissonModel}), since the distribution of $v_i=\ln u_i$ tends to $\delta_0$, the deterministic Dirac delta distribution which puts all mass on $0$. The inference might aim for the log-normal Poisson model directly as in \cite{ZhouLDC12}, or it can be performed by (pseudo-)maximum likelihood estimation of the simple Poisson model. The latter provides a consistent estimator as long as the log-linear mean function is correctly specified, even if higher moments do not possess the limitations inherent in the simple Poisson model \cite{Winkelmann08}. 

Summing up our review on the count modeling perspective, we learn that preserving the log-linear mean function in a Poisson model is crucial towards consistency of the estimator. Moreover, modeling counts in a log-normal model gives us intuition why leverage score sampling can capture the underlying linear model accurately: In the log-normal Poisson model, $u$ follows a log-normal distribution. It thus holds for $\ln \lambda = X\gamma + \ln u \,=\, X\gamma + v,$ that
\begin{eqnarray*}
	v&\sim &\mathcal{N}\left(-\frac{\sigma^2}{2}\cdot\mathds{1},\sigma^2 \mathds{I}_n\right)
\end{eqnarray*}
by independence of the observations, which implies 
\begin{eqnarray*}
\ln \lambda &\sim &\mathcal{N}\left(X\gamma-\frac{\sigma^2}{2}\cdot\mathds{1},\sigma^2 \mathds{I}_n\right). \end{eqnarray*}
Omitting the bias $\mu=-\frac{\sigma^2}{2}$ in each intercept term (which can be cast into $X$), we notice that this yields again an ordinary least squares problem $\norm{X\gamma-\ln(\lambda)}^2$
defined in the columspace of $X$.

\begin{figure*}[t!]
\centering
\begin{tabular}{P{0.31\linewidth}|P{0.31\linewidth}|P{0.31\linewidth}}
\includegraphics[width=1\linewidth]{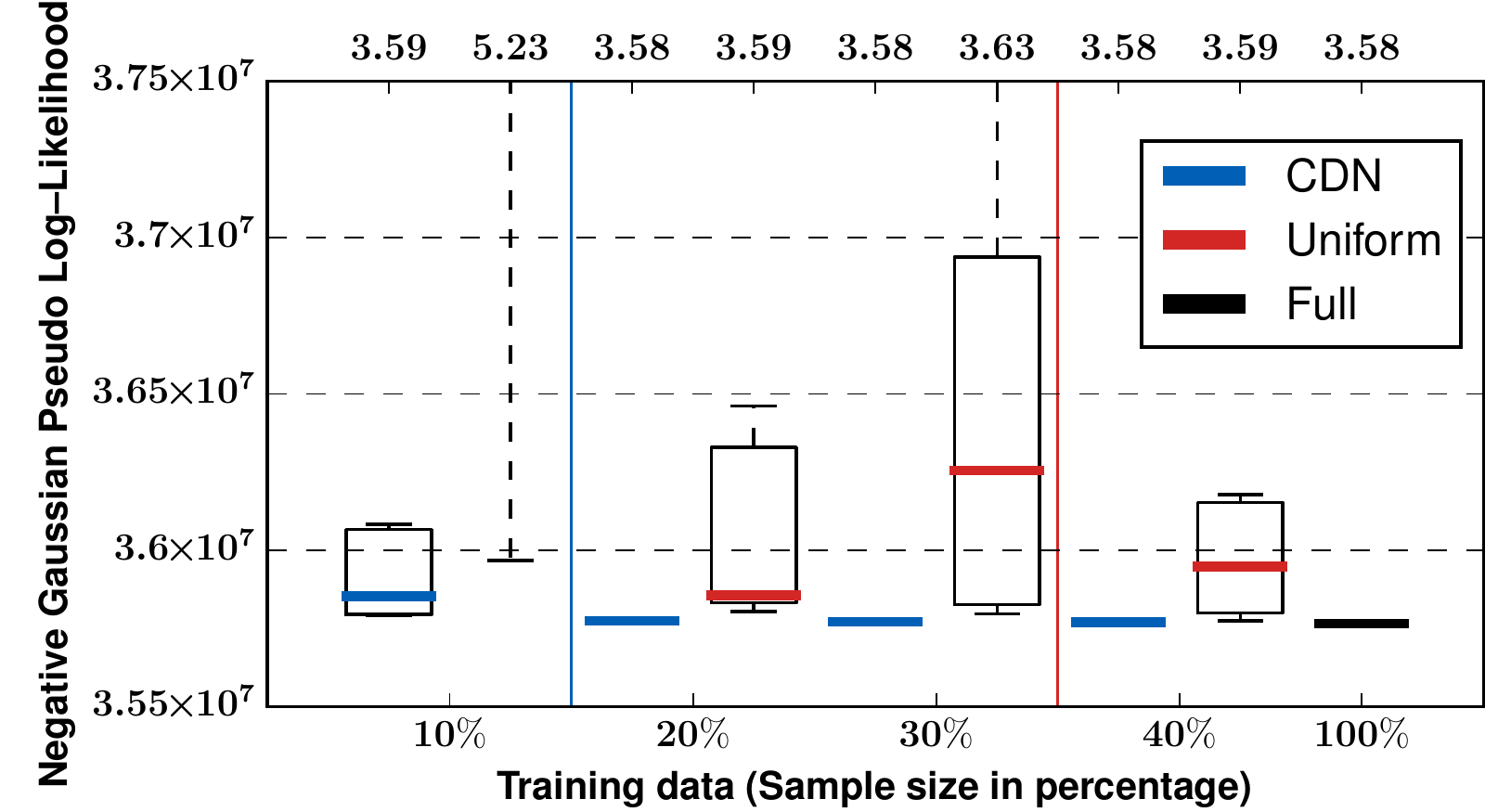}
&\includegraphics[width=1\linewidth]{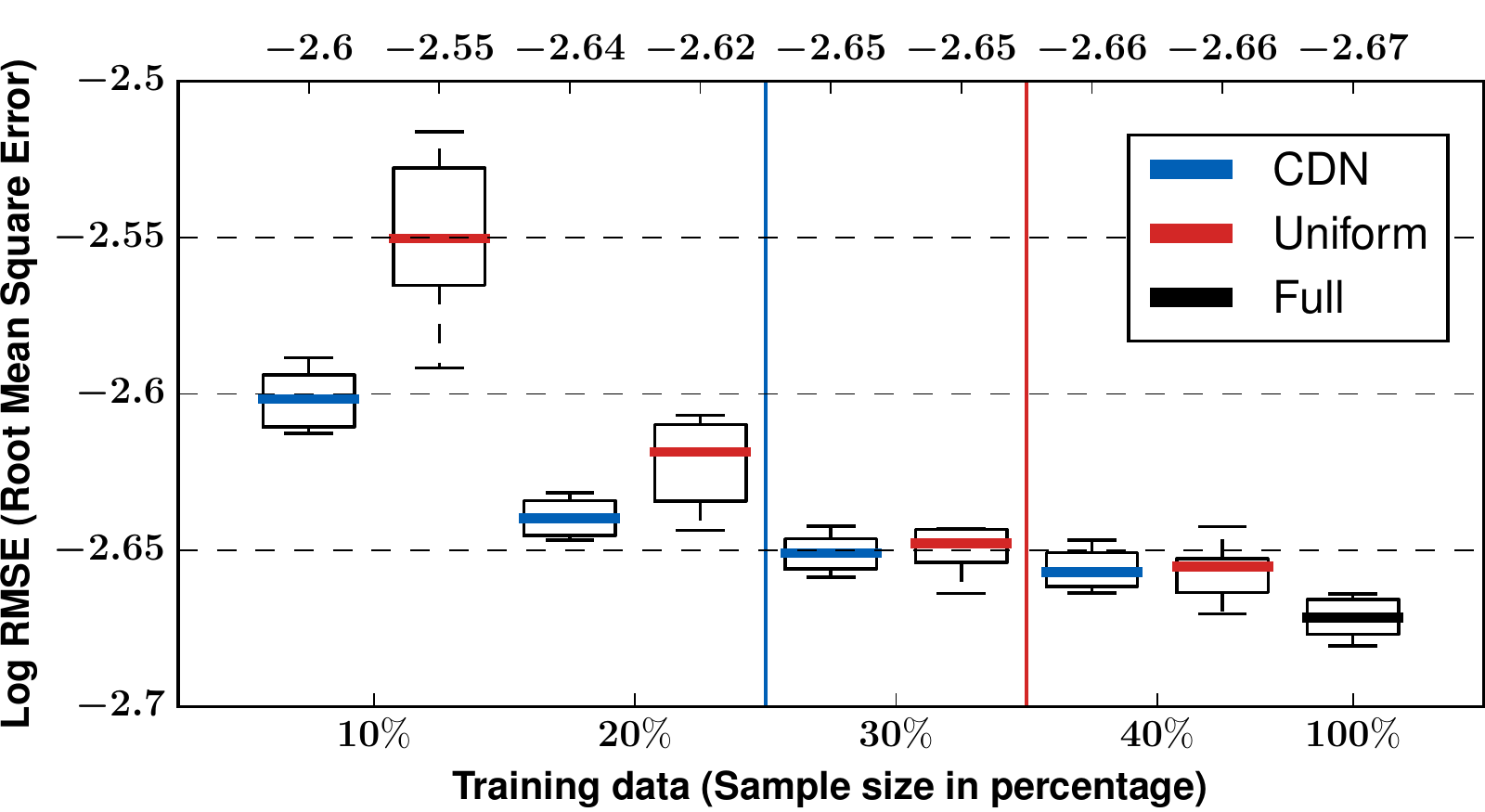}
&\includegraphics[width=1\linewidth]{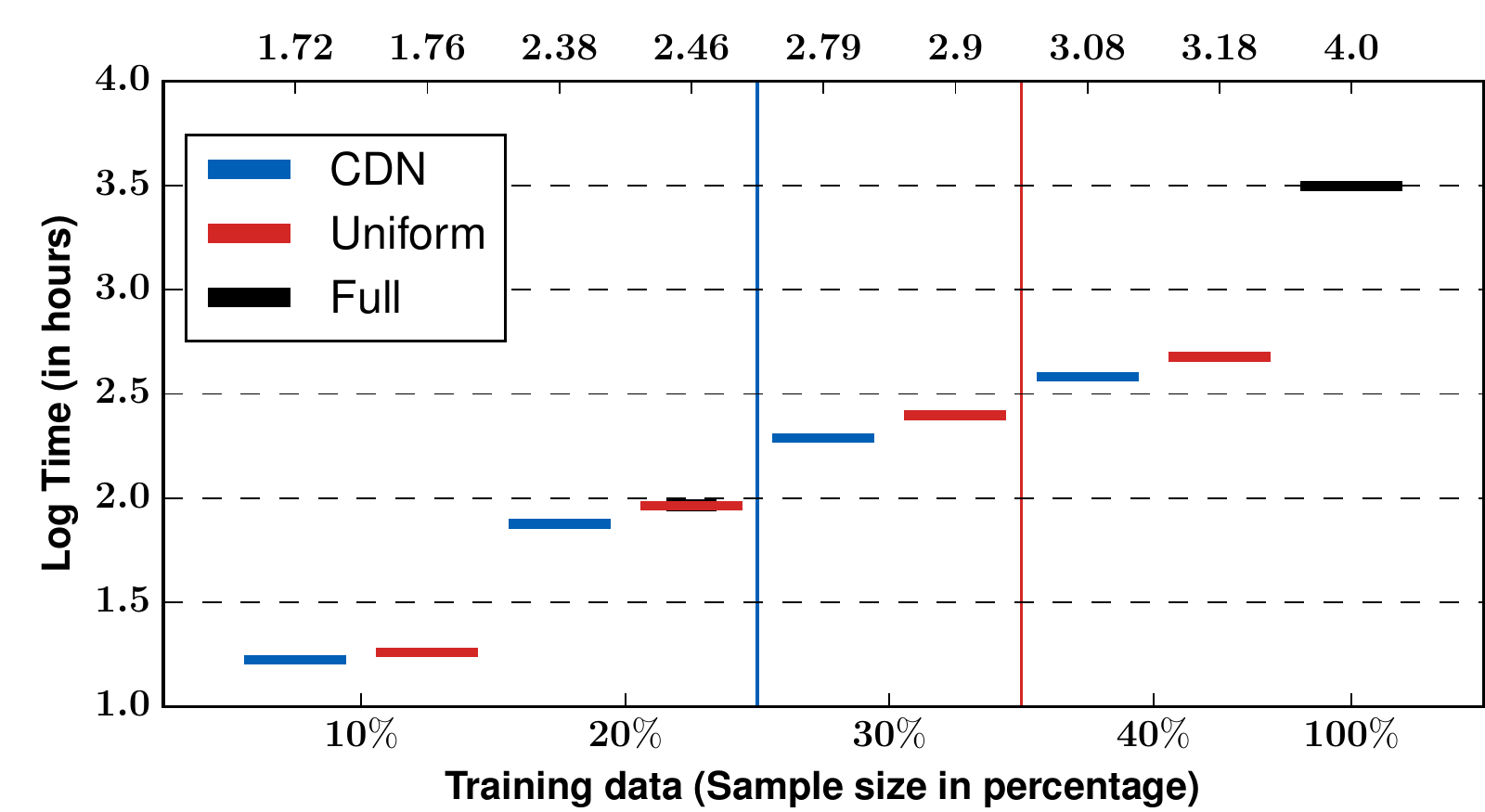}\\
\includegraphics[width=1\linewidth]{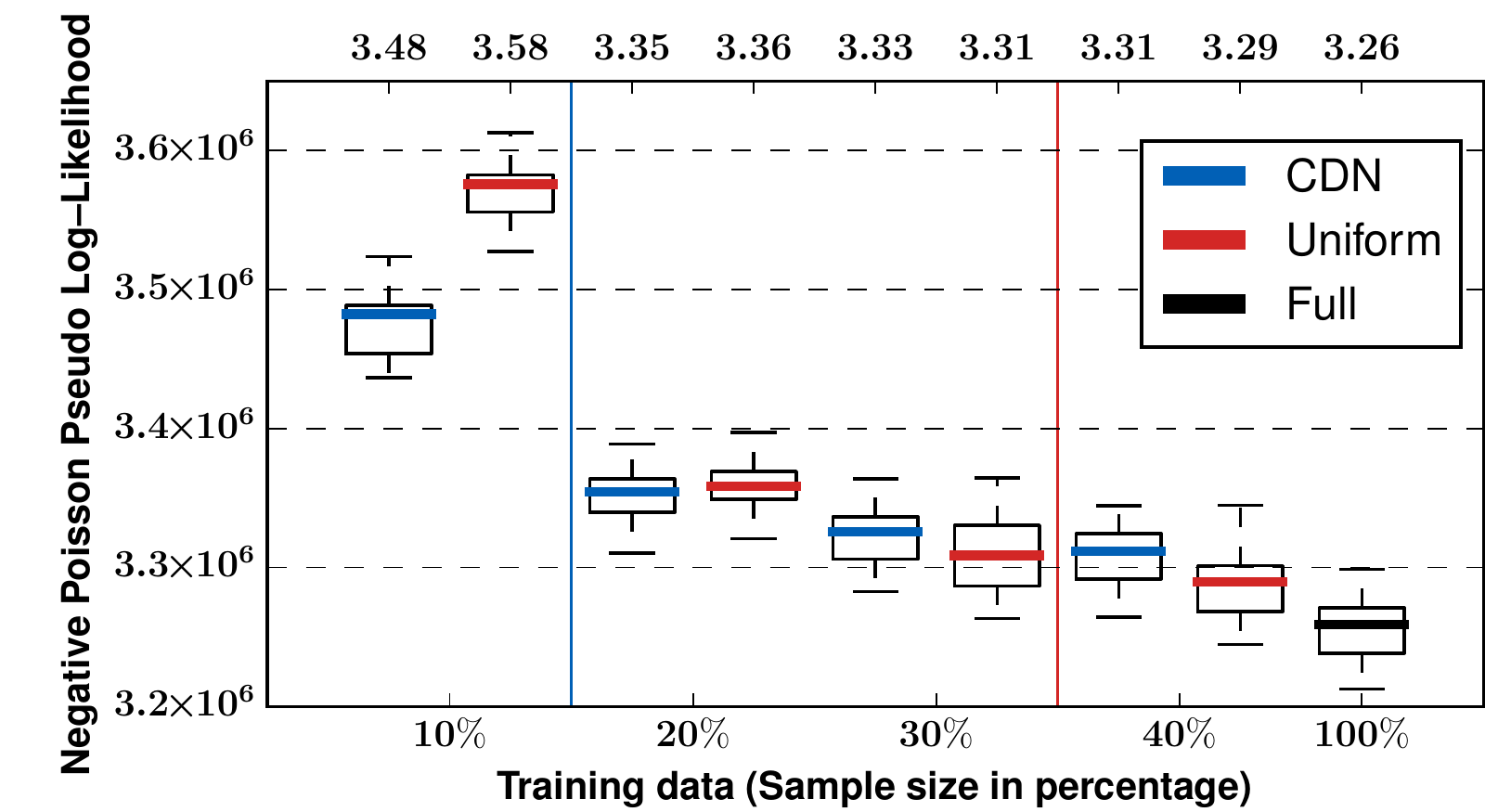}
&\includegraphics[width=1\linewidth]{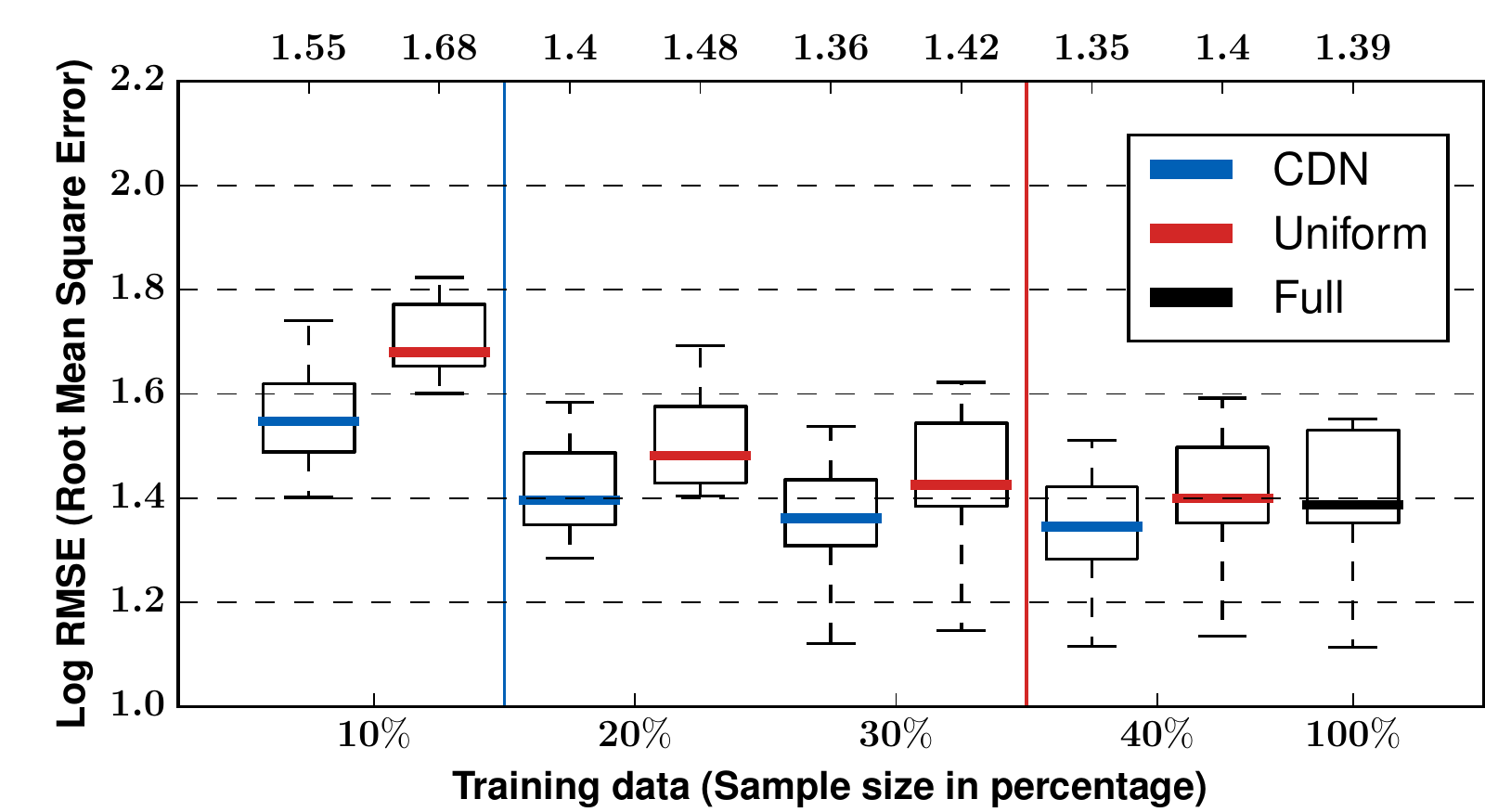}
&\includegraphics[width=1\linewidth]{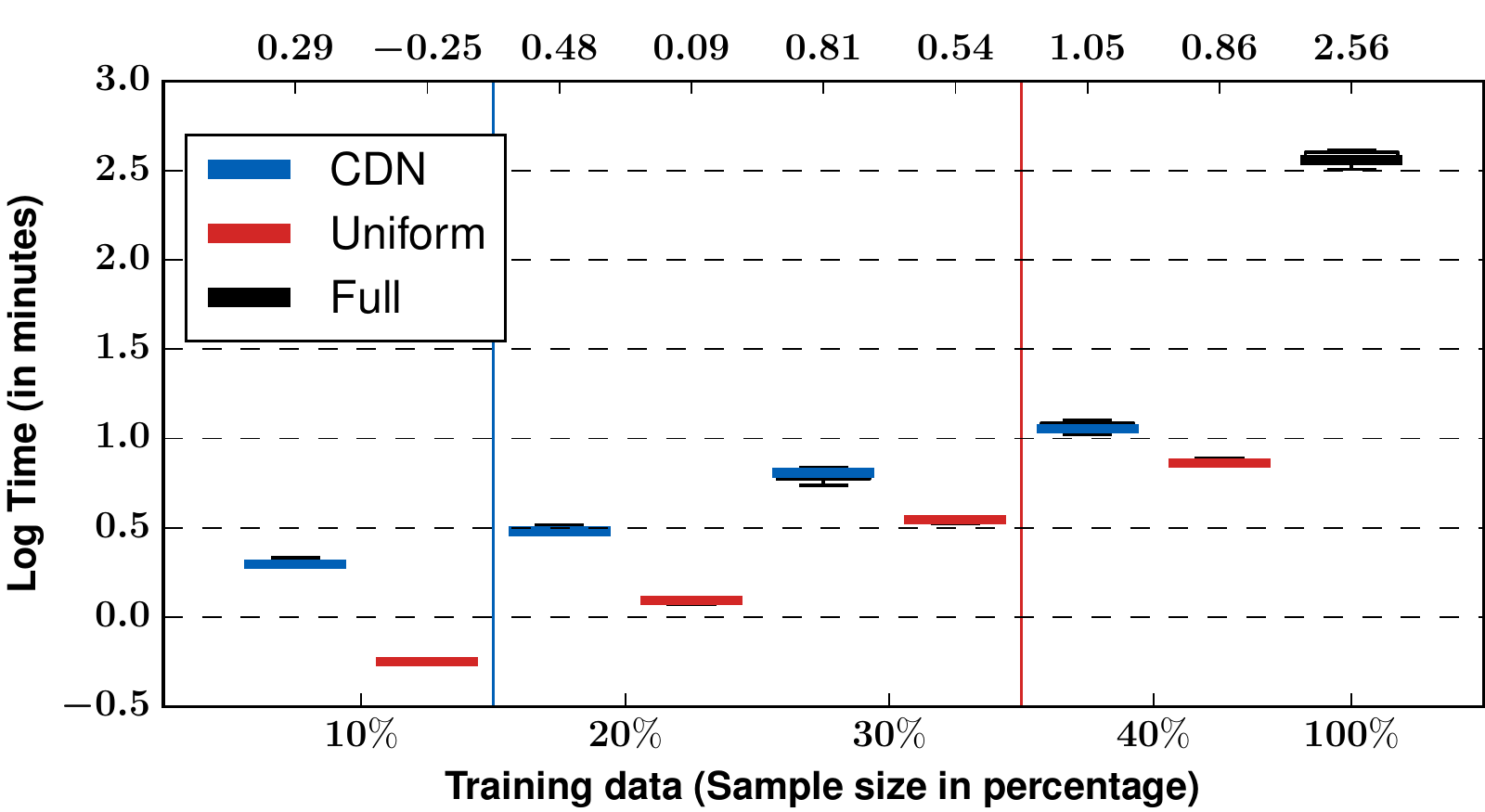}\\
{\small\bf Negative Log Pseudo Likelihood} & {\small\bf RMSE} & {\small\bf Training Time}
\end{tabular}
\caption{{\bf (Q1)} Performance (the lower, the better) of Gaussian CDNs on MNIST (upper row) and Poisson CNDs on the traffic dataset (lower row) 10-fold cross-validated. Shown are the negative log pseudo likelihood (left), the squared error loss (middle, in log-space) as well as the training time (right, in log-space) on the y-axis for different proportions of the data  sampled (x axis). Please note the jump in the x-axis after $40\%$.  As one can see, CDNs (blue) quickly approach the predictive performance of the full dataset (Full, black). Uniform sampling (Uniform, red) does not perform as well as CDNs.  Moreover, CDNs can be orders of magnitude faster than DNs on the full dataset and scale similar to uniform sampling. This is also supported by the vertical lines. They denote the mean performances (the more to the left, the better) on the top axes. (Best viewed in color)}
\label{fig:cdns_exp}
\end{figure*}

There is still a missing piece in our argumentation. In the previous section we have used that the coreset construction is an $\eps$-subspace embedding for the columnspace of the whole data set including the dependent variable, i.e., for $[X,\ln(\lambda)]$. We face two problems. First, $\lambda$ is only implicitly given in the data, but is not explicitly available. Second, $\lambda$ is a vector derived from $X^{\setminus i}$ in our setting and might be different for any of the $d$ instances. Fortunately, it was shown via more complicated arguments \cite{DrineasMM08}, that it is sufficient for a good approximation, if the sampling is done obliviously to the dependent variable. The intuition comes from the fact that the loss of any point in the subspace can be expressed via the projection of $\ln(\lambda)$ onto the subspace spanned by $X$, and the residual of its projection. A good approximation of the subspace implicitly approximates the projection of any fixed vector, which is then applied to the residual vector of the orthogonal projection. This solves the first problem, since it is only necessary to have a subspace embedding for $X$. The second issue can be addressed by increasing the sample size by a factor of $O(\log d)$ for boosting the error probability to $O(1/d)$ and taking a union bound.

\section{Empirical Illustration}
Our intention here is to corroborate our theoretical results by investigating empirically the following questions:
\textbf{(Q1)} How does the performance of CDNs compare to DNs with access to the full training data set and to a uniform sample from the training data set? and how does the empirical error behave according to the sample sizes?  \textbf{(Q2)} Do coresets affect the structure recovered by the DN? 
To this aim, we implemented (C)DNs in Python calling R. All experiments ran on a Linux machine (56 cores, 4 GPUs, and 512GB RAM).

\textbf{Benchmarks on MNIST and Traffic Data \textbf{(Q1)}:} 
We considered two datasets. In a first experiment, we used the MNIST\footnote{\url{http://yann.lecun.com/exdb/mnist/}}
data set of handwritten labeled digits. We employed the training set consisting of 55000 images, each with 784 pixels, for a total of 43,120,000 measurements, and trained Gaussian DNs on it.
The second data set we considered contains traffic count measurements on selected roads around the city of Cologne in Germany \cite{ide2015lte}. It consists of 7994 time-stamped measurements taken by 184 sensors for a total of 1,470,896 measurements. 
On this dataset we trained Poisson DNs.
For each dataset, we performed 10 fold cross-validation for training a full DN (Full) using all the data, leverage score sampling coresets (CDNs), and uniform samples (Uniform), for different sample sizes. We then compared the predictions made by all the DNs and the time taken to train them. 
For the predictions on the MNIST dataset, we clipped the predictions to the range [0,1] for all the DNs. For the Traffic dataset, we computed the predictions $\floor{x}$ of every measurement $x$ rounded to the largest integer less than or equal to $x$.

\begin{table}[t!]
\centering
\begin{tabular}{rrrrr}\small
Sample & \multicolumn{2}{c}{MNIST} & \multicolumn{2}{c}{Traffic} \\
\cmidrule(l{.4em}){2-3}\cmidrule(l{.4em}){4-5}
portion & GCDN & GUDN & PCDN & PUDN \\ \hline
10\% & \bf 18.03\% & 11162.01\% & \bf 6.81\% & 9.6\% \\
20\% & \bf 0.57\% & 13.86\% & \bf 2.9\% & 3.17\% \\
30\% & \bf 0.01\% & 13.33\% &  2.04\% & \bf 1.68\% \\
40\% & \bf 0.01\% & 2.3\% &  1.59\% & \bf 0.99\%
\end{tabular}
\caption{{\bf (Q1)} Comparison of the empirical relative error (the lower, the better). Best 
results per dataset are bold. Both Gaussian (GCDNs) and Poisson (PCDNs) CDNs recover the model well, with a fraction of the training data. Uniformly sampled DNs (UDNs) lag behind as the sample size drops.\label{tab:errors}}
\end{table}

Fig.~\ref{fig:cdns_exp} summarizes the results. As one can see, CDNs outperform DNs trained on full data and
are orders of magnitude faster. Compared to uniform sampling, coresets are competitive. 
Actually, as seen on the traffic dataset, CDNs can have more predictive power than the ``optimal'' model using the full data. This is in line with \citeauthor{Mahoney11}~(\citeyear{Mahoney11}), who observed that coresets implicitly introduce regularization and lead to more robust output.  
Table \ref{tab:errors} summarizes the empirical relative errors $|f(X,\tilde\gamma) - f(X,\gamma^*)|/f(X,\gamma^*)$ between (C/U)DNs $\tilde\gamma$ and DNs $\gamma^*$ trained on all the data. CDNs clearly recover the original model, at a fraction of training data. 
Overall, this answers {\bf (Q1)} affirmatively.

\begin{figure*}[t!]
\centering
\begin{tabular}{ccc}
\includegraphics[width=0.31\linewidth]{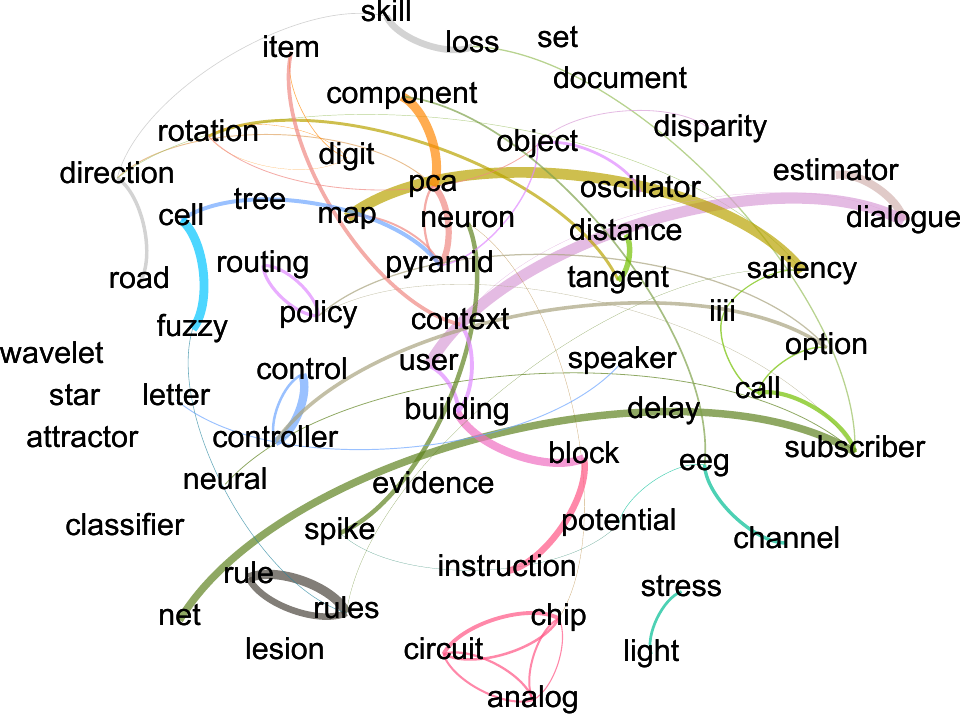}&
\includegraphics[width=0.31\linewidth]{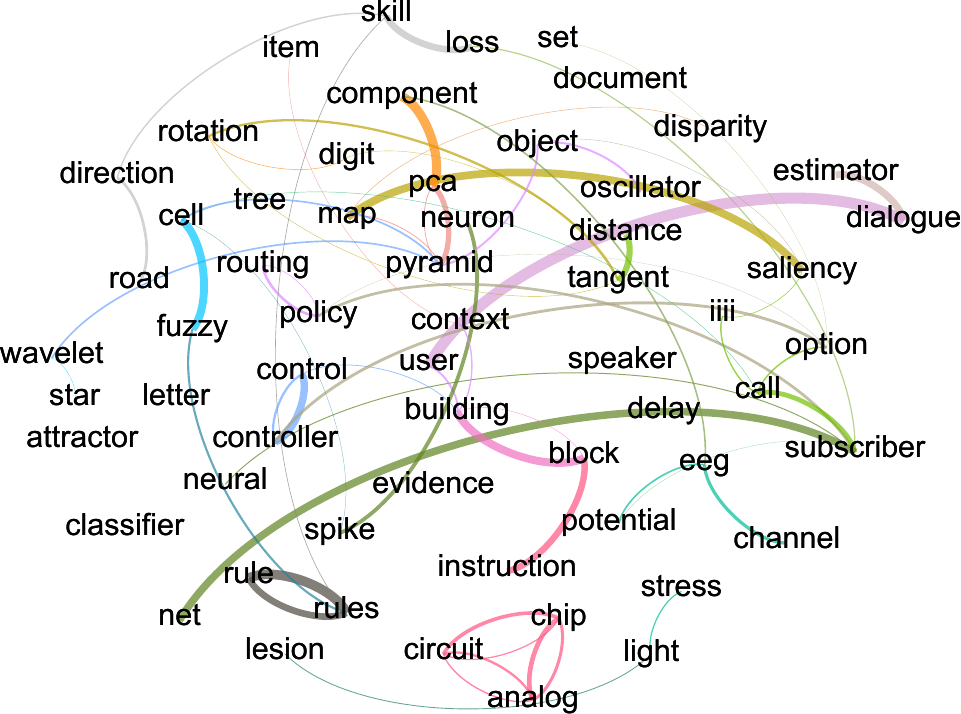}&
\includegraphics[width=0.31\linewidth]{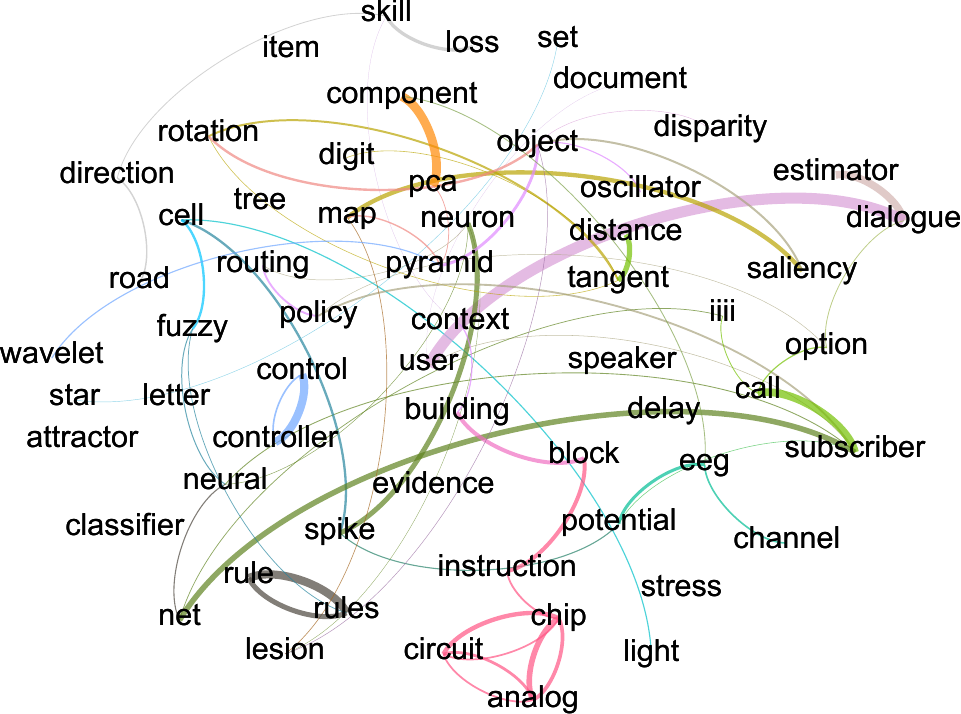} \\ 
\multicolumn{3}{r}{\small\bf Gaussian CDN}\\ \hline
\multicolumn{3}{r}{\small\bf Poisson CDN}\\
\includegraphics[width=0.31\linewidth]{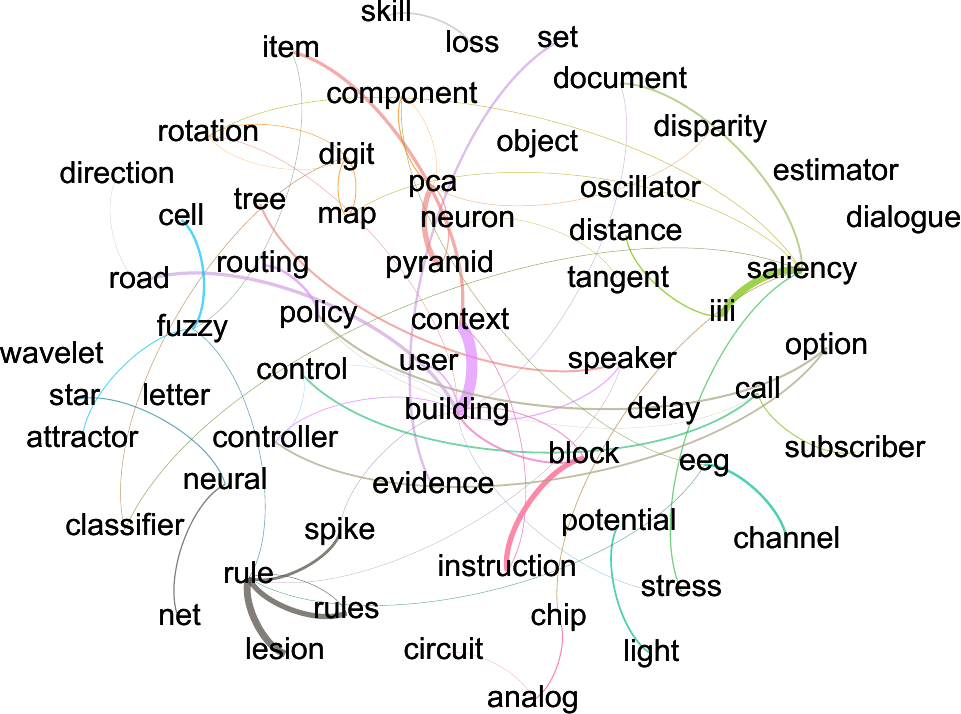}&
\includegraphics[width=0.31\linewidth]{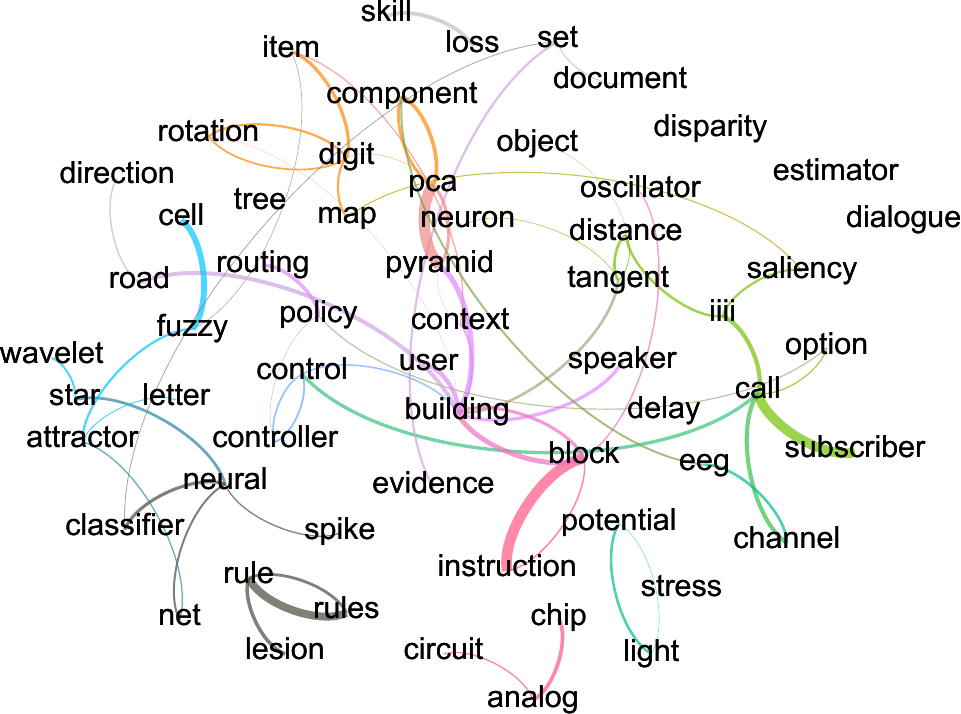}&
\includegraphics[width=0.31\linewidth]{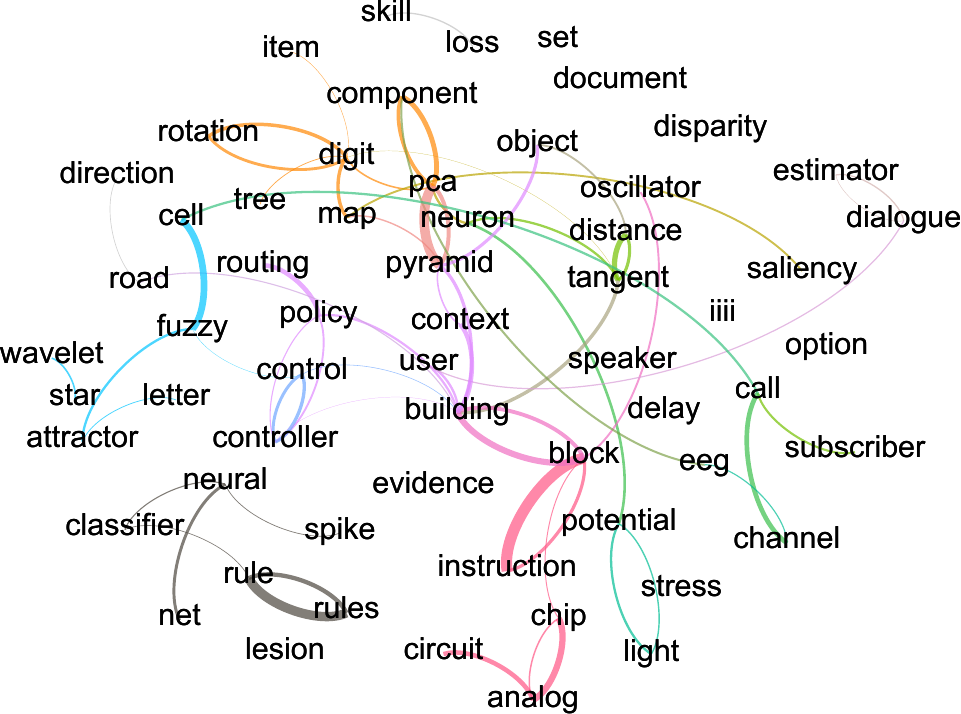}
\end{tabular}
\caption{{\bf (Q2)} Elucidating the relationships between random variables. Shown are the (positive) dependency structures of Gaussian (top) and Poisson (bottom) CDNs on NIPS and different learning sampling sizes: using 40\% (Left) , 70\% (Middle) and 100\% (Right). The edges show the 70 top thresholded positive coefficients of the GLMs. The colors of the edges represent modularity. As one can see, CDNs elucidate relationships among the words that make semantically sense and approach the structure learned using the full dataset. For a quantitative assessment, see Tab.~\ref{tab:FN}. (Best viewed in color)\label{fig:strct_exp}}
\end{figure*}

\textbf{Relationship Elucidation \textbf{(Q2)}:}
We investigated the performance of CDNs when recovering the graph structure of word interactions from a text corpus. For this purpose, we used the NIPS\footnote{\url{https://archive.ics.uci.edu/ml/datasets/bag+of+words}} bag-of-words dataset. It contains 1,500 documents with a vocabulary above 12k words. We considered the 100 most frequent words.

Fig.~\ref{fig:strct_exp} illustrates the results qualitatively. It shows three CDNs of sampling sizes 40\%, 70\% and 100\% for Gaussians (top) after a $\log(x+1)$ transformation and for Poissons (bottom): CDNs capture well the gist of the NIPS corpus. Table~\ref{tab:FN} confirms this quantitatively. It shows the Frobenius norms between the DNs: CDNs capture the gist better than naive, i.e., uniform sampling. This answers {\bf (Q2)} affirmatively.

To summarize our empirical results, the answers to questions {\bf (Q1)} and {\bf (Q2)} show the benefits of CDNs.

\section{Conclusions}
Inspired by the question of how  we can train  graphical  models  on  a  massive  dataset, we 
have studied coresets for estimating Dependency networks (DNs). We established the first rigorous
guarantees for obtaining compressed $\eps$-approximations of Gaussian DNs for
large data sets. We proved worst-case impossibility results on coresets for Poisson DNs. A review of log-normal Poisson modeling of counts provided deep insights into why our coreset construction still performs well for count data in practice. 

Our experimental results demonstrate, the resulting Core Dependency Networks (CDNs) can achieve significant gains over no or naive sub-sampling,
even in the case of count data, making it possible to learn models on much larger datasets using the same hardware. 
\begin{table}[t!]

\centering
\begin{tabular}{ccccc}\small
Sample & \multicolumn{2}{c}{UDN} & \multicolumn{2}{c}{CDN}\\
\cmidrule(l{.4em}){2-3}\cmidrule(l{.4em}){4-5}
portion & Gaussian & Poisson & Gaussian & Poisson \\ \hline
40\%  & 9.0676 & 6.4042&  \textbf{3.9135}  & \textbf{0.6497} \\ 
70\% & 4.8487  & 1.6262  & \textbf{2.6327} & \textbf{0.3821} 
\end{tabular}
\caption{{\bf (Q2)} Frobenius norm of the difference of the adjacency matrices (the lower, the better) recovered by DNs trained on the full data and trained on a uniform subsample (UDN) resp. coresets (CDNs) of the training data. The best results per statiscal type (Gaussian/Poisson) are bold. CDNs recover the structure better than UDNs.\label{tab:FN}}
\end{table}

CDNs provide several interesting avenues for future work. The conditional independence assumption opens the door to explore hybrid multivariate models, 
where each variable can potentially come from a different GLM family or link function, on massive data sets. This can further be used to hint at independencies among variables in the multivariate setting, making them useful in many other large data applications. Generally, our results may pave the way to establish coresets for deep models using the close connection between dependency networks and deep generative stochastic networks \cite{bengioLAY14}, sum-product networks \cite{Poon2011,Molina2017}, as well as other statistical models that build multivariate distributions from univariate ones \cite{yang15}.

{\bf Acknowledgements:} This work has been supported by Deutsche Forschungsgemeinschaft
(DFG) within the Collaborative Research Center SFB 876 "Providing Information by Resource-Constrained Analysis", projects B4 and C4.

\newpage
\bibliography{ref,bibliography}

\end{document}